\documentclass{article}
\usepackage{arxiv}
\usepackage[utf8]{inputenc} % allow utf-8 input
\usepackage[T1]{fontenc}    % use 8-bit T1 fonts
\usepackage{hyperref}       % hyperlinks
\usepackage{url}            % simple URL typesetting
\usepackage{booktabs}       % professional-quality tables
\usepackage{amsfonts}       % blackboard math symbols
\usepackage{nicefrac}       % compact symbols for 1/2, etc.
\usepackage{microtype}      % microtypography
\usepackage{lipsum}		    % Can be removed after putting your text content
\usepackage{graphicx}
\usepackage[numbers]{natbib}
\usepackage{doi}

\usepackage{amsmath}
\usepackage{makecell}
\usepackage{booktabs}
\usepackage[table]{xcolor}
\usepackage{caption}
\usepackage{listings}
\usepackage{xcolor}
\usepackage{amsthm}
\usepackage{pythonhighlight}

\usepackage{mathtools} 
\DeclarePairedDelimiter{\norm}{\lVert}{\rVert}

\usepackage{amsthm}
\newtheorem{theorem}{Theorem} % Define a theorem environment
\newtheorem{lemma}[theorem]{Lemma}

\theoremstyle{definition}

\usepackage{algpseudocode} % for pseudocode
\usepackage{algorithm}

\usepackage{pythonhighlight}

\title{An Iterative Geometric Approach to Optimizing Separating Hyperplanes}

\author{ \href{https://orcid.org/0000-0001-6410-8895}{\includegraphics[scale=0.06]{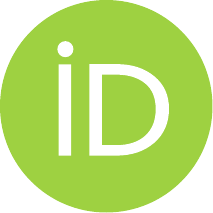}\hspace{1mm}\'Akos Hajnal}\textsuperscript{a,b}\\
	\textsuperscript{a}Laboratory of Parallel and Distributed Systems\\
        Institute for Computer Science and Control (SZTAKI) \\
        \textsuperscript{b}John von Neumann Faculty of Informatics \\
        Obuda University \\
	Budapest, Hungary \\
	\texttt{akos.hajnal@sztaki.hu} \\
}

\renewcommand{\undertitle}{}
\renewcommand{\headeright}{}
\hypersetup{
pdftitle={An Iterative Geometric Approach to Optimizing Separating Hyperplanes},
pdfauthor={Akos Hajnal},
pdfkeywords={Optimizing, Separating Hyperplanes}
}

\begin{document}
\maketitle

\begin{abstract}
Given a binary-labeled linearly separable dataset, and the objective is to compute the maximum-margin separating hyperplane, also known as the hard-margin Support Vector Machine (SVM) classifier. This paper investigates whether, if given an initial separating hyperplane, can it be exploited to reach this unique optimum more efficiently. We present a geometric approach that gradually improves the alignment of the hyperplane, starting from an initial separating hyperplane, while preserving separation and continuously increasing its margin until convergence to the global optimum. At each iteration, the method considers only local information, namely the current active set, and aims to re-align the hyperplane according to the optimal separating hyperplane of this reduced subset. Consequently, the original convex quadratic optimization problem is addressed through a sequence of smaller subproblems. The paper presents the algorithm in detail, together with a preliminary experimental evaluation and several theoretical findings. The results suggest that, when an initial separating hyperplane is available, the proposed method can be competitive on larger datasets and, in some cases, can outperform state-of-the-art approaches that solve the optimization problem directly.
\end{abstract}

\keywords{Optimal separating hyperplane \and Maximum-margin separator \and Support Vector Machine}

\setcitestyle{square}

\section{Introduction}

Given a binary-labeled, linearly separable  dataset, the \textit{maximum-margin} separating hyperplane is the hyperplane that maximizes the distance to the nearest samples while maintaining the separation of samples belonging to the different classes. This hyperplane is unique and also known as the \textit{optimal} separating hyperplane or the hard-margin Support Vector Machine (SVM) classifier introduced by Vladimir Vapnik \cite{cortes1995support}. 

For large datasets, even testing linear separability (i.e., feasibility) can be computationally demanding, which can be expressed as a linear program. Computing the maximum-margin separator leads to a convex quadratic optimization problem, which is commonly solved using Sequential Minimal Optimization (SMO) \cite{platt1998sequential} or other convex optimization techniques \cite{boyd2004convex}. Solver implementations are available in tools such as CVXPY \cite{diamond2016cvxpy}, LIBSVM \cite{chang2011}, and LIBLINEAR \cite{fan2008liblinear}.

This paper investigates whether, if given an initial separating hyperplane, which is not optimal, can it be iteratively refined through a series of gradual re-alignments into the optimal separating hyperplane -- while maintaining the separation and continuously increasing the margin. 

To illustrate the core idea, consider an initial separating hyperplane $H_1$ together with its closest oppositely labeled samples, $p_1$ and $n_1$, as shown in Figure 1. First, $H_1$ is aligned to lie equidistantly between $p_1$ and $n_1$. If this hyperplane is not already optimal, its normal vector $w_1$ is rotated toward the direction of $\overrightarrow{n_1  p_1}$ about the midpoint of $p_1$ and $n_1$, which serves as the pivot. This re-alignment continuously increases the margin with respect to both $p_1$ and $n_1$. The rotation continues until either the normal vector becomes parallel to $\overrightarrow{n_1  p_1}$, yielding the optimal separator, or the rotation is blocked by another sample, $p_2$ in the figure, resulting in the adjusted hyperplane $H_2$.

In the latter case, the newly encountered blocking sample $p_2$ is incorporated into the 'active set', samples closest to the current hyperplane, and a new 'preferred' rotation direction is selected, namely $\overrightarrow{n_1  p_2}$. The normal vector of $H_2$ is then rotated toward this new direction until either another blocking sample is encountered or the maximum-margin separating hyperplane is reached. In the example, no further sample blocks the rotation, and the process terminates at the maximum-margin separator $H_3$.

\begin{figure*}[t]
\centering
\includegraphics[width=0.35\linewidth]{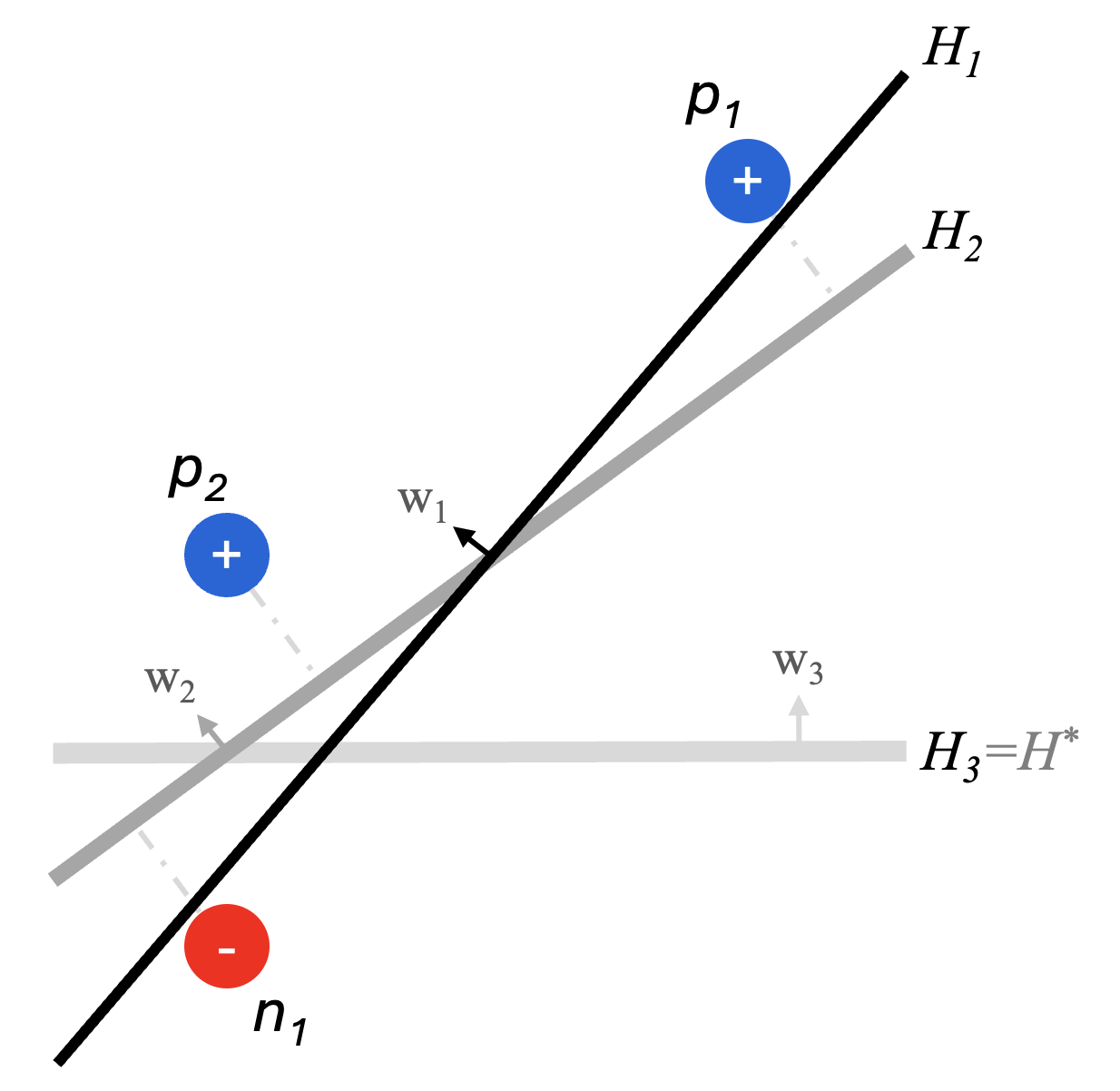}
\caption{Illustration of the method: starting from an initial separating hyperplane $H_1$, the margin is iteratively increased by re-aligning the hyperplane, progressing through $H_2$ and finally reaching the maximum-margin hyperplane $H_3$.}\label{fig1}
\end{figure*}

As shown later, the preferred rotation direction is chosen as the normal vector of the maximum-margin separating hyperplane of the current active set. Intuitively, this choice was motivated by the fact that the optimality of a separating hyperplane is determined by the samples closest to it. If the hyperplane is already optimal with respect to these samples, then it is globally optimal (formalized by Lemma 1 in the Appendix). Otherwise, the projections of the two classes onto the current hyperplane form disjoint convex hulls (formalized by Lemma~3). To reduce the separation between these projected sets, the most natural direction is the one connecting their closest points, which can be achieved by rotating the normal vector of the current hyperplane toward this direction. This direction is aligned with the normal vector of the maximum-margin separating hyperplane of the current active set.

The proposed approach can be beneficial in cases where finding a feasible separating hyperplane is computationally easier than directly computing the optimal one, or where such a hyperplane is already available. Beyond formalizing and evaluating the proposed method, we present some theoretical findings that provide additional insight into maximum-margin hyperplane optimization, including the correspondence between local and global optima and another geometric interpretation of the Karush--Kuhn--Tucker (KKT) conditions in the context of SVMs \cite{cortes1995support, boyd2004convex}.

The main contributions of the paper are as follows:
\begin{itemize}
    \item an iterative technique to optimize separating hyperplanes,
    \item experimental evaluation of the proposed method,
    \item theoretical findings on maximum-margin hyperplane optimization.
\end{itemize}

In the next section, we introduce the background and terminology. Section 3 presents the hyperplane optimization technique. Section 4 evaluates the proposed method. Section 5 compares the method with related approaches, finally, Section 6 concludes the paper. Theoretical contributions are formalized in the Appendix.

\section{Background}
If given a linearly separable dataset $X \in \mathbb{R}^{n \times d}$ consisting of $n$ samples (rows of $X$) and $d$ features (columns of $X$), together with binary class labels $y \in \{+1,-1\}^n$ associated with each sample, hyperplane 

\begin{equation}
H : w^T x + b = 0
\end{equation}

(strictly) \emph{ separates} the dataset, if

\begin{equation}
y_i(w^T x_i + b) > 0,\qquad i=1,\ldots,n.
\end{equation}

The \emph{positive side} of the hyperplane is determined by the orientation of its normal vector and consists of the half-space defined by $w^Tx+b>0$. The \emph{negative side} is defined analogously by $w^Tx+b<0$.

The \emph{maximal-margin separating hyperplane} $H^* : w^{*T}x + b^* = 0$ is the separator hyperplane having the maximal geometric margin (Euclidean distance) to the nearest samples:

\begin{equation}
(w^*, b^*) =
\arg\max_{w,b}
\min_{i=1,\ldots,n}
\frac{y_i(w^T x_i + b)}{\norm{w}}.
\end{equation}

Using the canonical hard-margin SVM formulation, $w^*$ and $b^*$ are the solutions of the (quadratic) optimization problem: 

\begin{equation}
\begin{aligned}
\min\limits_{w, b} \quad & \frac{1}{2} \norm{w}^2 \\ 
\textrm{s.t.} \quad & y_i (w^{T} x_i + b) \geq 1, \quad i=1,\ldots,n. 
\end{aligned}
\end{equation}

The maximal-margin separating hyperplane $H^*$ is unique, also referred to as the \emph{optimal} separating hyperplane, and is determined by a set of \emph{support vectors} $S \subseteq X$ \cite{cortes1995support}. All support vectors lie at the same geometric distance from $H^*$, called the \textit{margin}:

\[
\gamma^*=\frac{y_s w^{*T}s+b^*}{\norm{w^*}}, \quad s \in S .
\]

The \emph{positive} and \emph{negative margin hyperplanes} are hyperplanes parallel to $H^*$, located at a distance $\gamma^*$ from $H^*$ on the positive and negative sides, respectively. All support vectors lie on these margin hyperplanes, equidistant from $H^*$. No other samples can occur within the region (\textit{slab}) bounded by them, thus support vectors are the samples closest to $H^*$.

In the simplest case, there is exactly one support vector from each class, and $H^*$ is the perpendicular bisector of the segment connecting these two samples. At the opposite extreme, every sample in the dataset can be a support vector. In practice, the number of support vectors is typically much smaller than the total number of samples.

We adopt the convention that every sample lying on either margin hyperplane of the optimal separating hyperplane is regarded as a support vector. This definition is slightly broader than the one induced by the Karush--Kuhn--Tucker (KKT) conditions \cite{cortes1995support}. In particular, some support vectors may be redundant, in the sense that they are assigned zero KKT multipliers in a valid optimal solution, while the remaining support vectors receive positive coefficients. Nevertheless, throughout this paper, all samples lying on the margin hyperplanes are referred to as support vectors.

The \emph{active set} of a separating hyperplane $H: w^Tx+b=0$ consists of the samples from both classes that are closest to $H$. Assuming that $H$ is positioned equidistantly between the nearest oppositely labeled samples (by adjusting the bias while preserving its orientation), the \emph{positive} and \emph{negative margin hyperplanes} are defined as the hyperplanes parallel to $H$ that pass through the positive and negative active samples, are denoted by $H^+$ and $H^-$, respectively.

\section{An Iterative Separating Hyperplane Optimization Method}

The algorithm takes two inputs: the dataset $X$ with corresponding labels $y$, and an initial separating hyperplane $H_1$, aligned to be equidistant from the nearest samples. It then performs a sequence of iterations, each transforming the current separating hyperplane $H_i: w_i^T x + b_i = 0$ into $H_{i+1}: w_{i+1}^T x + b_{i+1} = 0$, with refined alignment and an increased margin. An iteration consists of four steps: (1) First, it calculates the active set $A_i$ of $H_i$. (2) Then, the appropriate re-alignment of $H_i$ is chosen, consisting of a rotation of its normal vector $w_i$ about a suitably chosen pivot point, based on the current active set $A_i$. (3) It is followed by determining the maximal admissible rotation in this direction, considering all samples in $X$. (4) Finally, the normal vector and bias are updated to obtain $w_{i+1}$ and $b_{i+1}$  accordingly, yielding the next separating hyperplane $H_{i+1}$. If no 'blocking samples' encountered in step (3), we reached the optimal solution, otherwise the calculation continues with the next iteration.

The pseudocode is shown in Algorithm \ref{algorithm}. The main steps are described in detail in the following subsections.

\begin{algorithm}[t]
\caption{Iterative Separating Hyperplane Optimization}
\label{algorithm}
\begin{algorithmic}[1]
\Require
    \Statex Linearly separable dataset $X=\{(x_i,y_i)\}_{i=1}^{n}$, $x_i \in \mathbb{R}^{d}$, $y_i\in\{-1,+1\}$,
    \Statex Separating hyperplane $H_1:w_1^Tx+b_1=0$, mid-aligned between the closest oppositely labeled samples.
\Ensure 
    \Statex Maximum-margin separating hyperplane $H^*:w^{*T}x+b^*=0$.
\State $i\gets1$
\While{true}
    \State Determine the active set $A_i$ of $H_i$
    \State Compute the optimal separating hyperplane $H_i^*:w_i^{*T}x+b_i^*=0$ of $A_i$
    \If{$H_i = H_i^*$}
        \State \Return $H_i$
    \EndIf
    \State Compute the maximum admissible interpolation parameter $\alpha$
    \State Update
    \[
    w_{i+1}\gets(1-\alpha)w_i+\alpha w_i^*, \quad b_{i+1}\gets -w_{i+1}^Tm_i,
    \]
    \[
    H_{i+1}\gets w_{i+1}^Tx + b_{i+1} =0,
    \]
    \quad\, where $m_i$ is the selected pivot (midpoint of a support vector pair of $H_i^*$)
    \State $i\gets i+1$
\EndWhile
\end{algorithmic}
\end{algorithm}

\subsection{Choosing a proper direction of re-alignment}

We are given a separating hyperplane $H_i$ and its active set $A_i \subseteq X$, samples closest to and equidistant from $H_i$. Our first goal is to decide whether $H_i$ is optimal with respect to $A_i$ or not. If not, our next goal is to choose a proper re-alignment of $H_i$ that increases the distance to all samples in $A_i$.

To determine whether $H_i$ is optimal with respect to the current active set (locally), we first apply an existing optimization method exclusively to the samples in $A_i$ to compute the optimal separating hyperplane $H_i^* : w_i^{*T}x + b_i^* = 0$ of $A_i$ and the corresponding set of support vectors $S_i \subseteq A_i$.

If $H_i$ coincides with $H_i^*$ (local optimum) and $H_i$ is a separating hyperplane (globally), then, as formally established by Lemma~1 in the Appendix, $H_i$ is the globally optimal separating hyperplane. Consequently, the algorithm can terminate.

If $H_i$ is not optimal, then there exists a re-alignment of $H_i$ that increases the margin with respect to $A_i$, which corresponds to rotating the normal vector $w_i$ toward $w_i^*$ about an appropriately chosen pivot. 

To formalize the rotation of a hyperplane from $H_i$ to $H_i^*$, we introduce a parameterized normal vector

\begin{equation}\label{wia}
    w_i(\alpha) = (1 - \alpha) w_i + \alpha w_i^*, \quad 0 \le \alpha \le 1,
\end{equation}

where the pivot is chosen as the midpoint $m$ of an arbitrarily selected oppositely labeled support vector pair $s_p, s_n \in S_i$, which yields the bias

\begin{equation}
    b_i(\alpha) = -w_i(\alpha)^T\left(\frac{s_p+s_n}{2}\right).
\end{equation}

The hyperplane is then given by

\begin{equation}
 H_i(\alpha): w_i(\alpha)^Tx + b_i(\alpha) = 0.
\end{equation}

Notice that as $\alpha$ increases from $0$ to $1$, hyperplane $H_i(\alpha)$ transforms from $H_i = H_i(0)$ to $H_i^* = H_i(1)$. First, we show that all support vectors remain on the positive and negative margin hyperplanes throughout the entire rotation, while non-support active samples move farther away from these boundaries, which behavior preserves proper separation of $A_i$ by $H_i(\alpha)$.

Let us introduce two hyperplanes $H_i^+(\alpha)$ and $H_i^-(\alpha)$, parallel to and equidistant from $H_i(\alpha)$, where $H_i^+(\alpha)$ passes through $s_p$ and $H_i^-(\alpha)$ passes through $s_n$:

\begin{equation}\label{hip}
\begin{aligned}
H_i^+(\alpha): w_i(\alpha)^T x - w_i(\alpha)^T s_p = 0,\\
H_i^-(\alpha): w_i(\alpha)^T x - w_i(\alpha)^T s_n = 0.
\end{aligned}
\end{equation}

First, we show that any positive support vector $s \in S_i$ remains on $H_i^+(\alpha)$, independently of the value of $\alpha$. Rearranging Equation~\ref{hip} and substituting $w_i(\alpha)$ from Equation~\ref{wia}, we obtain

\begin{equation}\label{hipr}
\begin{aligned}
H_i^+(\alpha): w_i(\alpha)^T x - w_i(\alpha)^T s_p \\
= w_i(\alpha)^T (x - s_p) \\
= \big((1 - \alpha) w_i + \alpha w_i^*\big)^T (x - s_p) \\
= (1 - \alpha) w_i^T (x - s_p) + \alpha w_i^{*T} (x - s_p) = 0.
\end{aligned}
\end{equation}

Since both $s$ and $s_p$ are active points, the vector from $s$ to $s_p$ is orthogonal to the normal vector $w_i$. Moreover, since $s$ and $s_p$ are both support vectors, the same vector is also orthogonal to $w_i^*$. Substituting $x = s$ into Equation~\ref{hipr} yields zero for both inner products $w_i^T (x - s_p)$ and $w_i^{*T} (x - s_p)$, implying that $s$ remains on hyperplane $H_i^+(\alpha)$ for any value of $\alpha$. The same argument applies analogously to negative support vectors respectively, implying that all such samples remain on the hyperplane $H_i^-$, throughout the entire rotation. (Note that this statement is true for any sample in $A_i$ that lies on either margin hyperplane of $H_i^*$; therefore, it remains valid under the broader definition of support vectors.)

Now consider a positive active sample $p$ that is not a support vector, and examine its position relative to the hyperplane $H_i^+(\alpha)$. Since both $p$ and $s_p$ are active points of $H_i$, $w_i^T(p-s_p)=0$. Substituting $x=p$ into Equation~\ref{hipr} yields $H_i^+(\alpha)=\alpha w_i^{*T}(p-s_p)$. $p$ is not a support vector, it lies strictly beyond the positive margin hyperplane of $H_i^*$; in contrast, $s_p$ lies on that positive margin hyperplane. It follows that $w_i^{*T}(p-s_p)>0$. Consequently, $H_i^+(\alpha)>0$ for every $\alpha>0$, hence, all non-support positive actives get farther from the positive margin hyperplane during the rotation (with a distance increasing linearly with $\alpha$). The same argument applies analogously to negative non-support active samples, where the value of $H_i^-(\alpha)$ becomes negative, indicating that these samples get below the negative margin hyperplane, and so also leave the slab during the rotation.

Since all the support vectors keep lying on the positive and negative margin hyperplanes of $H_i(\alpha)$ for every value of $\alpha$ while the slab bounded by them remains free of samples of $A_i$, $H_i(\alpha)$ remains a separating hyperplane for $A_i$. 

The margin of $H_i(\alpha)$ is half the distance between these margin hyperplanes. To establish the continuous increase of this margin, let $\theta(\alpha)$ denote the angle between the normal vectors $w_i(\alpha)$ and $w_i^*$. Applying Lemma 2 in the Appendix, we obtain:

\begin{equation}\label{gamma_alpha_and_gammastar}
\begin{aligned}
\gamma_i(\alpha)=\gamma_i^*\cos\theta(\alpha).
\end{aligned}
\end{equation}

Since $w_i(\alpha)$ rotates continuously from $w_i$ toward $w_i^*$, the angle $\theta(\alpha)$ decreases continuously as $\alpha$ increases.  Consequently, $\cos\theta(\alpha)$ increases continuously, and, by Equation \ref{gamma_alpha_and_gammastar}, so does the margin $\gamma_i(\alpha)$. Therefore, the margin increases strictly monotonically throughout the entire rotation, until it reaches the optimal value $\gamma_i^*$ at $\alpha=1$.

To summarize, the rotation of the current normal vector toward the normal vector of the optimal separating hyperplane around a properly selected pivot point has the following properties:
\begin{itemize}
\item it preserves the strict separation of the active set $A_i$;
\item it increases the margin strictly monotonically.
\end{itemize}

\subsection{Determining the extent of re-alignment}

As shown, rotation about the selected pivot increases the margin with respect to the current active set and may eventually lead to the optimal alignment. However, samples outside the current active set may encounter the margin hyperplanes before the local optimum is reached, thereby limiting further rotation in that direction. Continuing the rotation beyond this point would violate the requirement that the slab remains free of samples.

The objective of this step is thus to determine the maximum admissible rotation of the margin hyperplanes about the selected positive and negative support vectors, defining the slab, without crossing any other samples. For this reason, we compute the maximum rotation of the positive margin hyperplane until it reaches a new positive sample and, analogously, the maximum rotation of the negative margin hyperplane until it reaches a new negative sample. The smaller of these two values determines the maximum admissible rotation.

Since the current hyperplane separates all samples, the positive margin hyperplane cannot encounter a negative sample before the negative margin hyperplane does, and vice versa. Therefore, only positive samples located between the positive margin hyperplanes of $H_i$ and $H_i^*$, and negative samples located between the negative margin hyperplanes of $H_i$ and  $H_i^*$, can limit the admissible rotation. Given the samples outside the active set, the parametric normal vector $w_i(\alpha)$, and the pivots $s_p$ and $s_n$ of the margin hyperplanes, the maximal admissible value of $\alpha$ can be determined. 

Considering a positive (non-active) sample $p$ and the corresponding positive margin hyperplane $H_i^+(\alpha)$, the value $\alpha_{max}^+$ at which $H_i^+(\alpha)$ first crosses $p$ is when the vector $p-s_p$ becomes orthogonal to the interpolated normal vector $w_i(\alpha)$, i.e.,

\begin{equation}\label{max_alpha}
\begin{aligned}    
    w_i(\alpha_{max}^+)^T(p - s_p) = 0 \\
    ((1 - \alpha_{max}^+) w_i + \alpha_{max}^+ w_i^*)^T(p - s_p) = 0.
\end{aligned}
\end{equation}

Rearranging yields

\begin{equation}
    \alpha_{max}^+(p) = \frac{w_i^T(p - s_p)}{w_i^T(p - s_p)-w_i^{*T}(p - s_p)}. 
\end{equation}

The corresponding value $\alpha_{max}^-(n)$ for a negative sample $n$ is obtained analogously using the negative pivot $s_n$. 

Samples that would reach the margin only after extending the interpolation beyond $w_i^*$ do not limit the current rotation and produce values $\alpha_{max}>1$. The maximum admissible interpolation parameter $\alpha_{actual}$ is therefore determined by the first sample that reaches either margin hyperplane:

\begin{equation}
\alpha_{actual} = \min\left( 1, \min\limits_{p \in X: y_p=+1} \alpha_{max}^+(p), \min\limits_{n \in X: y_n=-1} \alpha_{max}^-(n) \right),
\end{equation}

where $y_i\in\{+1, -1\}$ denotes the label associated with sample $i$.

Notice that all candidate values of $\alpha_{max}$ can be computed simultaneously using matrix operations.

If $\alpha_{actual}=1$, the re-alignment reaches the optimal direction, and the computation can stop. Otherwise, the rotation is applied with $\alpha_{actual}$, while keeping the separating hyperplane passing through the selected pivot point, yielding the next separating hyperplane for the subsequent iteration:

\begin{equation}
\begin{aligned}    
    w_{i+1} = (1 - \alpha_{actual}) w_i + \alpha_{actual} w_i^*,\\
    b_{i+1} = -\frac{1}{2}w_{i+1}^T(s_p+s_n),\\
    H_{i+1} : w_{i+1}^Tx + b_{i+1} = 0.
\end{aligned}
\end{equation}

Notice that the norm of the interpolated normal vector is not constrained during the rotation, therefore, after each update, the bias is adjusted to keep the hyperplane passing through the selected pivot. Moreover, the active set in the next iteration will include the newly encountered blocking sample(s), while some samples from the current active set (non-support vectors) no longer lie on the updated margin hyperplanes, thus leave this set.

\subsection{Complexity}

Let $n$ denote the number of samples, $d$ the number of features, $a_i=|A_i|$ the size of the active set in iteration $i$, and let $iter$ denote the number of iterations until convergence.

In each iteration:
\begin{enumerate}
    \item Determining the \emph{active set} requires computing the signed distances of all samples from the current hyperplane, which takes $\mathcal{O}(nd)$ time.
    \item Computing the \emph{optimal separating hyperplane} for the active set $a_i$ depends on the employed SVM solver. Denoting the running time of the solver on $a_i$ samples with $d$ features by $T_{\mathrm{SVM}}(a_i,d)$, this step has complexity
    \[
    \mathcal{O}(T_{\mathrm{SVM}}(a_i,d)).
    \]
        
    \item Computing the \emph{maximum admissible interpolation parameter} $\alpha$ requires evaluating all samples against the interpolated margin hyperplanes. Using matrix operations, this step requires $\mathcal{O}(nd)$ time.
    \item \emph{Updating} the hyperplane involves operations on the normal vector and bias, requiring $\mathcal{O}(d)$ time.
\end{enumerate}

Hence, the running time of a single iteration is 
\[
\mathcal{O}(nd+T_{\mathrm{SVM}}(a_i,d)).
\]

If the algorithm converges after $iter$ iterations, the total running time is
\[
\mathcal{O}(\sum_{i=1}^{iter} (nd+T_{\mathrm{SVM}}(a_i,d))).
\]

% If the active-set size is bounded by a constant $k$, the complexity becomes
% \[
% \mathcal{O}(iter(nd+T_{\mathrm{SVM}}(k, d))).
% \]

The most expensive step is the computation of the optimal separating hyperplane of the current active set. We do not specify the SVM implementation, the corresponding computational complexity depends on the chosen solver. Typically $|A_i| \ll n$. The existence of a finite upper bound on $iter$, is not proven in this paper and remains an open question.

% For example, linear SVM solvers based on LIBLINEAR typically exhibit near-linear scaling in the number of samples and features, while SMO-type kernel SVM solvers may have substantially higher complexity, with commonly cited worst-case bounds between $\mathcal{O}(a_i^2)$ and $\mathcal{O}(a_i^3)$.
% We note that although the margin increases strictly monotonically and is bounded above, the finite-step termination of the method, i.e., the existence of a finite bound on $iter$, is not proven in this paper and remains an open question.

\section{Experimental Results}

The experiments were carried out in the Google Colaboratory (Colab) \cite{google-colab} hosted Jupyter Notebook environment. We selected the 'CPU' runtime\footnote{CPU: Intel(R) Xeon(R) CPU @ 2.20GHz (model: 79, family: 6), RAM: 12.7 GB. No GPU acceleration was utilized.} and implemented the proposed solution in Python using NumPy and CVXPY.\footnote{Python v3.12.13, NumPy v2.0.2, CVXPY v1.6.7, OSQP v1.1.3, CLARABEL v0.11.1.}

We used the MNIST handwritten digits dataset \cite{lecun1998a}, consisting of grayscale images with a resolution of 28x28 pixels, represented as 784-dimensional feature vectors. Each experiment considers a pair of digits; therefore, only samples belonging to these two digits are retained, with one digit labeled as the positive class and the other as the negative class (one-vs-one classification). We selected 10 linearly separable digit pairs \cite{hajnal2026linearseparabilitymnisthandwritten} and performed experiments in both the training set (60,000 samples) and the test set (10,000 samples). Thus, a total of 20 experiments were conducted, each repeated 10 times; the reported execution times represent the averages over these 10 runs.

CVXPY was chosen as the baseline for computing maximum-margin separating hyperplanes because it provided the most accurate results. CVXPY used solver the OSQP solver \cite{osqp} internally, and the optimization problem was formulated as follows:

\begin{lstlisting}[language=Python,basicstyle=\small]
import cvxpy as cp
...
w = cp.Variable(number_of_features)
b = cp.Variable()
constraints = [cp.multiply(y, X @ w + b) >= 1] 
problem = cp.Problem(cp.Minimize(cp.sum_squares(w)), constraints)
problem.solve()    
...
\end{lstlisting}

where \texttt{X} and \texttt{y} contain the samples and their corresponding labels. The normal vector and bias of the optimal separating hyperplane are returned in the variables \texttt{w} and \texttt{b}, respectively. Depending on the choice of \texttt{X}, it computes either a local or a global optimum, respectively.

To obtain a separating hyperplane of the dataset (a feasible but not necessarily optimal solution), we replaced the optimization objective with \texttt{cp.Minimize(0)}. In these cases, CVXPY employed the CLARABEL solver \cite{Clarabel_2024}.

The objective of the experiments was to compare the computation time of the optimal solution for the entire dataset with the time required for computing a feasible solution followed by the proposed iterative refinement. The results are shown in Tables~\ref{table1}, where the columns are defined as follows:
\begin{itemize}
    \item \textbf{Exp.}: The label of the experiment. The values `train' and `test' refer to the training and test datasets, respectively; the numbers identify the digits being separated.
    \item \textbf{Samples}: The number of samples in the dataset.
    \item \textbf{Baseline}: The time required to compute the optimal separating hyperplane for the entire dataset using CVXPY (baseline), in seconds.
    \item \textbf{Current}: The total time required to compute a feasible separating hyperplane and then iteratively optimize it (the proposed method), in seconds.
    \item \textbf{Ratio}: The speedup factor defined as the ratio of \textbf{Baseline} to \textbf{Current}.
    \item \textbf{Feasible}: The time required to compute a (feasible, non-optimal) separating hyperplane, in seconds.
    \item \textbf{Refinement}: The time required to iteratively optimize the initial separating hyperplane.
    \item \textbf{(Opt. time)}: The cumulative time spent computing optimal separating hyperplanes for the active sets during the iterative refinement process, included in \textbf{Current} time, in seconds.
    \item \textbf{Iters.}: The number of iterations required to reach the optimum.
    \item \textbf{Margin}: The margin of the optimal separating hyperplane (Euclidean distance to the nearest sample).
    \item \textbf{Supports}: The number of support vectors of the optimal separating hyperplane.
    \item \textbf{Actives (max)}: The average active set size over the iterations, with the maximum value shown in parentheses.    
\end{itemize}

\begin{table*}[t]
\caption{Experimental results comparing the baseline global maximum-margin solver with the proposed iterative refinement method on the MNIST dataset.}
\scriptsize
\setlength{\tabcolsep}{5.8pt}
\begin{tabular}{c*{12}{c}}
\hline

\makecell{\textbf{Exp.}} &\makecell{\textbf{Samples}} &\makecell{\textbf{Baseline}} &\makecell{\textbf{Current}} &\makecell{\textbf{Speedup}} &\makecell{\textbf{Feasible}} &\makecell{\textbf{Refinement}} &\makecell{\textbf{(Opt. time)}} &\makecell{\textbf{Iters.}} &\makecell{\textbf{Margin}} &\makecell{\textbf{Supports}} &\makecell{\textbf{Actives (max)}}\\

\hline

\makecell{test\_0\_4} &\makecell{1962} &\makecell{\textbf{1.15s}} &\makecell{6.07s} &\makecell{0.19} &\makecell{2.02s} &\makecell{4.05s} &\makecell{(3.66s)} &\makecell{104} &\makecell{208.8478} &\makecell{69} &\makecell{39.2 (72)} \\
\makecell{test\_1\_7} &\makecell{2163} &\makecell{\textbf{2.15s}} &\makecell{4.31s} &\makecell{0.50} &\makecell{1.14s} &\makecell{3.17s} &\makecell{(2.81s)} &\makecell{100} &\makecell{89.7916} &\makecell{80} &\makecell{43.7 (80)} \\
\makecell{test\_1\_9} &\makecell{2144} &\makecell{\textbf{0.94s}} &\makecell{3.86s} &\makecell{0.24} &\makecell{1.40s} &\makecell{2.46s} &\makecell{(2.16s)} &\makecell{84} &\makecell{118.4781} &\makecell{63} &\makecell{35.1 (64)} \\
\makecell{test\_7\_8} &\makecell{2002} &\makecell{\textbf{2.31s}} &\makecell{9.12s} &\makecell{0.25} &\makecell{1.51s} &\makecell{7.62s} &\makecell{(7.06s)} &\makecell{161} &\makecell{95.3113} &\makecell{111} &\makecell{61.2 (111)} \\
\makecell{test\_4\_6} &\makecell{1940} &\makecell{\textbf{1.68s}} &\makecell{7.03s} &\makecell{0.24} &\makecell{1.68s} &\makecell{5.35s} &\makecell{(4.91s)} &\makecell{133} &\makecell{95.7831} &\makecell{93} &\makecell{51.4 (94)} \\
\makecell{test\_2\_9} &\makecell{2041} &\makecell{\textbf{1.95s}} &\makecell{16.38s} &\makecell{0.12} &\makecell{1.64s} &\makecell{14.74s} &\makecell{(13.87s)} &\makecell{251} &\makecell{65.3078} &\makecell{134} &\makecell{79.9 (135)} \\
\makecell{test\_4\_7} &\makecell{2010} &\makecell{\textbf{1.94s}} &\makecell{9.00s} &\makecell{0.22} &\makecell{1.56s} &\makecell{7.44s} &\makecell{(6.85s)} &\makecell{176} &\makecell{72.3506} &\makecell{113} &\makecell{63.2 (114)} \\
\makecell{test\_2\_7} &\makecell{2060} &\makecell{\textbf{2.05s}} &\makecell{12.89s} &\makecell{0.16} &\makecell{1.86s} &\makecell{11.03s} &\makecell{(10.33s)} &\makecell{208} &\makecell{52.2019} &\makecell{153} &\makecell{81.8 (154)} \\
\makecell{test\_2\_6} &\makecell{1990} &\makecell{\textbf{1.87s}} &\makecell{16.87s} &\makecell{0.11} &\makecell{1.87s} &\makecell{15.00s} &\makecell{(14.16s)} &\makecell{253} &\makecell{55.2817} &\makecell{149} &\makecell{82.7 (149)} \\
\makecell{test\_8\_9} &\makecell{1983} &\makecell{\textbf{0.99s}} &\makecell{12.83s} &\makecell{0.08} &\makecell{1.70s} &\makecell{11.13s} &\makecell{(10.40s)} &\makecell{195} &\makecell{54.0334} &\makecell{138} &\makecell{74.1 (138)} \\

\hline 

\makecell{train\_0\_4} &\makecell{11765} &\makecell{\textbf{21.74s}} &\makecell{43.01s} &\makecell{0.51} &\makecell{17.98s} &\makecell{25.03s} &\makecell{(20.71s)} &\makecell{303} &\makecell{46.0946} &\makecell{168} &\makecell{95.6 (168)} \\
\makecell{train\_1\_7} &\makecell{13007} &\makecell{62.45s} &\makecell{\textbf{25.50s}} &\makecell{2.45} &\makecell{11.15s} &\makecell{\textbf{14.35s}} &\makecell{(10.56s)} &\makecell{246} &\makecell{23.1712} &\makecell{164} &\makecell{90.8 (165)} \\
\makecell{train\_1\_9} &\makecell{12691} &\makecell{\textbf{22.95s}} &\makecell{29.87s} &\makecell{0.77} &\makecell{10.53s} &\makecell{\textbf{19.34s}} &\makecell{(15.09s)} &\makecell{268} &\makecell{22.7511} &\makecell{179} &\makecell{102.6 (179)} \\
\makecell{train\_7\_8} &\makecell{12116} &\makecell{94.95s} &\makecell{\textbf{73.34s}} &\makecell{1.29} &\makecell{24.33s} &\makecell{\textbf{49.01s}} &\makecell{(42.86s)} &\makecell{433} &\makecell{16.0989} &\makecell{282} &\makecell{162.7 (282)} \\
\makecell{train\_4\_6} &\makecell{11760} &\makecell{\textbf{36.87s}} &\makecell{75.20s} &\makecell{0.49} &\makecell{22.72s} &\makecell{52.48s} &\makecell{(46.26s)} &\makecell{450} &\makecell{15.4355} &\makecell{305} &\makecell{168.6 (304)} \\
\makecell{train\_2\_9} &\makecell{11907} &\makecell{116.62s} &\makecell{\textbf{101.51s}} &\makecell{1.15} &\makecell{26.05s} &\makecell{\textbf{75.46s}} &\makecell{(67.38s)} &\makecell{570} &\makecell{9.6313} &\makecell{344} &\makecell{192.6 (344)} \\
\makecell{train\_4\_7} &\makecell{12107} &\makecell{\textbf{72.53s}} &\makecell{87.08s} &\makecell{0.83} &\makecell{15.84s} &\makecell{\textbf{71.24s}} &\makecell{(62.94s)} &\makecell{574} &\makecell{8.5969} &\makecell{342} &\makecell{203.9 (342)} \\
\makecell{train\_2\_7} &\makecell{12223} &\makecell{182.94s} &\makecell{\textbf{141.75s}} &\makecell{1.29} &\makecell{27.79s} &\makecell{\textbf{113.96s}} &\makecell{(102.67s)} &\makecell{778} &\makecell{5.1903} &\makecell{403} &\makecell{241.2 (403)} \\
\makecell{train\_2\_6} &\makecell{11876} &\makecell{164.34s} &\makecell{\textbf{137.52s}} &\makecell{1.20} &\makecell{23.99s} &\makecell{\textbf{113.52s}} &\makecell{(103.07s)} &\makecell{741} &\makecell{4.2232} &\makecell{412} &\makecell{245.5 (413)} \\
\makecell{train\_8\_9} &\makecell{11800} &\makecell{207.11s} &\makecell{\textbf{137.48s}} &\makecell{1.51} &\makecell{23.55s} &\makecell{\textbf{113.93s}} &\makecell{(104.24s)} &\makecell{685} &\makecell{1.7526} &\makecell{415} &\makecell{246.5 (415)} \\

\hline

\end{tabular}
\label{table1}
\end{table*}

The results show no reduction in computation time compared with the baseline approach (columns \textbf{Baseline} and \textbf{Current})  on the smaller datasets, containing approximately 2,000 samples (test set). In fact, the speedup factor of around 0.1 indicates that the proposed method requires approximately ten times more computation time in these cases. Similarly, the cumulative time spent solving the sequence of optimization subproblems (column \textbf{Opt. time}) can, in some cases, be ten times greater than the time required to solve the optimization problem directly.

Improvements become observable on the larger datasets (training set), containing approximately 12,000 samples. When the computation time required to obtain the initial separating hyperplane is included, the proposed method achieves a speedup in six out of the ten experiments (with significant improvement in some cases). When an initial separating hyperplane is assumed to be already available, and its computation time is thus excluded, the proposed method (column \textbf{Refinement}) outperforms the baseline in eight out of the ten experiments.

Notice that the proposed solution also successfully reached the optimal solution in all experiments, requiring between 84 and 778 iterations. Moreover, as expected, the active set sizes remained small compared to the total number of samples: the maximum active set size was approximately equal to the number of support vectors, while the average active set size was roughly half that value.

\section{Related Work}

The maximum-margin separating hyperplane is most commonly obtained by solving the convex quadratic optimization problem underlying the hard-margin Support Vector Machine (SVM) \cite{cortes1995support}. Since the fundamental work of Cortes and Vapnik, numerous optimization techniques have been developed for this problem, including interior-point methods, Sequential Minimal Optimization (SMO) \cite{platt1998sequential}, gradient-descent algorithms, and linear SVM solvers such as LIBLINEAR \cite{fan2008liblinear}. Although these methods differ substantially in their optimization strategies, they seek the optimum by solving the global optimization problem defined over the entire dataset.

The method proposed in this paper differs in that it does not begin from an unconstrained optimization problem but from a feasible separating hyperplane, and it then focuses exclusively on improving its margin while preserving separability throughout the optimization process. Consequently, every intermediate iterate represents a valid separating hyperplane, rather than an infeasible or partially optimized solution. 

Another distinguishing feature is the decomposition of the optimization process into a sequence of smaller subproblems. Instead of repeatedly considering the complete dataset, each iteration identifies the current active set, consisting of samples that determine the present margin, and computes the maximum-margin separator only for this subset. The obtained local optimum is then used to guide the improvement of the global separator. As the active set is typically much smaller than the full dataset, the method performs a series of optimizations on substantially reduced problems. 

Standard SVM optimization methods search for the optimum in the parameter space by minimizing a quadratic objective or maximizing its dual counterpart. In contrast, the proposed approach performs geometric transformations of the separating hyperplane and the optimization is interpreted directly in terms of geometric distances and vector orientations. Each iteration either reaches the optimum or increases the margin, consequently, progress is measured geometrically rather than by the decrease of a convex objective function.

Finally, the proposed method does not rely on a particular algorithm used to solve the active-set optimization problem. Any method capable of computing the maximum-margin separator for the current active set can be employed, including conventional SVM solvers. Therefore, the contribution of this work is not a replacement for existing SVM optimization techniques but an iterative alternative that exploits feasibility, active-set geometry, and successive margin improvement to approach the global optimum.

\section{Conclusion}\label{sec:Conclusions}
This paper presented a geometric approach to optimizing separating hyperplanes. Starting from an arbitrary separating hyperplane, it repeatedly identifies the current active set, computes its maximum-margin separating hyperplane, and incrementally re-aligns the current hyperplane while preserving linear separability and continuously increasing the margin. Unlike conventional SVM optimization methods, which solve a single but global optimization problem, the proposed approach decomposes the optimization into a sequence of smaller subproblems, defined by the active sets.

Beyond formalizing the algorithm itself, the paper established several theoretical results that provide additional insight into maximum-margin hyperplane optimization. In particular, we proved that a separating hyperplane that is optimal for its active set is also globally optimal, derived a geometric relationship between the margin and the angle to the optimal hyperplane, and presented a condition of optimality based on the intersection of the convex hulls of projected active samples.

Experimental results on linearly separable MNIST digit pairs demonstrate that the proposed method consistently finds the optimal separating hyperplane. Although it does not outperform direct optimization on smaller datasets, it becomes competitive on larger datasets. Throughout the experiments, the active sets remained small relative to the full datasets, supporting the motivation for solving a sequence of reduced optimization problems.

Some theoretical questions remained open. In particular, it has not been proven that the chosen re-alignment direction is optimal and yields the fastest possible increase of the margin. Neither that the method always terminates after a finite number of iterations. Establishing these properties, together with evaluating the method on larger datasets and in combination with alternative SVM solvers, constitutes an interesting direction for future research.

\section*{Dataset} The MNIST handwritten digits dataset, among other public sources,  is accessible from within the TensorFlow-Keras framework (\texttt{tensorflow.keras.datasets.mnist}).

For reproducibility of the results the source codes of the experiments are made publicly accessible in a GitHub repository: \href{https://github.com/ahajnal/Optimizing-Separating-Hyperplanes}{https://github.com/ahajnal/Optimizing-Separating-Hyperplanes}. 

\section*{Declaration interests}
The author declares that he has no known competing financial interests or personal relationships that could have appeared to influence the work reported in this paper.

\section*{Declaration of generative AI and AI-assisted technologies in the manuscript preparation process}
During the preparation of this work, the authors used OpenAI's GPT-5.5 large language model to improve the wording and clarity of the manuscript and to identify potential inconsistencies. After using this service, the authors carefully reviewed and edited the content as needed and take full responsibility for all aspects of the published article.

\vfill 

\bibliographystyle{IEEEtran}
\bibliography{references}  
 
\vfill
\pagebreak

\section*{Appendix}\label{sec:Appendix}

% ===========================================================

\begin{lemma}[Local--global optimality]
Let $X$ be a linearly separable dataset and $A \subseteq X$ be a subset of $X$. If a hyperplane $H_A$ is the maximum-margin separating hyperplane for $A$ and $H_A$ separates all samples of $X$, then $H_A$ is the optimal separating hyperplane for $X$.
\end{lemma}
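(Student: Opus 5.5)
The approach is a sandwich argument on margins: restricting to a subset can only relax the constraints, and $H_A$ attains the relaxed optimum on all of $X$.

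First I fix the interpretation. As the lemma is used in Section 3, $A$ is the active set of $H_A$, i.e. the samples of $X$ closest to $H_A$. So I will read ``$H_A$ separates all samples of $X$'' as: no sample of $X$ lies strictly inside the slab bounded by the margin hyperplanes of $H_A$. Equivalently,
\[
\gamma_X(H_A)=\gamma_A(H_A),
\]
where for a hyperplane $H: w^Tx+b=0$ and a set $Y$ we write $\gamma_Y(H)=\min_{x_j\in Y} y_j(w^Tx_j+b)/\norm{w}$. Without this slab condition the statement can fail. For example, take $A$ to be a single far-apart pair of opposite samples, while $X$ contains other points close to their bisector. So the proof must use the condition, and I will state explicitly that this is the hypothesis being used.

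The key steps are as follows.
\begin{enumerate}
\item Monotonicity under inclusion. Since $A\subseteq X$, the minimum defining $\gamma_X(H)$ runs over a superset of the samples used for $\gamma_A(H)$. Hence $\gamma_X(H)\le\gamma_A(H)$ for every hyperplane $H$.
\item Maximising over $(w,b)$ and using that $H_A$ is the maximum-margin separator of $A$ gives
\[
\max_H \gamma_X(H)\le \max_H\gamma_A(H)=\gamma_A(H_A).
\]
\item By the slab condition, $\gamma_A(H_A)=\gamma_X(H_A)$. Therefore $H_A$ attains $\max_H\gamma_X(H)$, so it is a maximiser of problem (3) for $X$.
\item By uniqueness of the optimal separating hyperplane of a linearly separable dataset, stated in the Background section, $H_A=H^*$.
\end{enumerate}
Equivalently, in the canonical form (4): rescale $H_A$ so that $\min_{x_j\in A}y_j(w^Tx_j+b)=1$. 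The slab condition then makes $(w,b)$ feasible for the constraints of all of $X$. Its objective $\tfrac12\norm{w}^2$ equals the optimum of the relaxed problem over $A$, which is a lower bound for the problem over $X$, so $(w,b)$ is optimal for $X$.

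The main obstacle is justifying the equality $\gamma_X(H_A)=\gamma_A(H_A)$ from the hypothesis as literally stated. Mere strict separation of $X$ does not give it, so the proof must identify $A$ with the samples closest to $H_A$, as in Algorithm~\ref{algorithm}. Once that is in place, the rest is a one-line relaxation argument plus uniqueness. Uniqueness itself I would take from the cited SVM theory: it follows from strict convexity of $\norm{w}^2$ over the convex feasible set of (4), together with the fact that $b$ is determined by $w$ at the optimum.
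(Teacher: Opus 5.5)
Your proof is correct and is essentially the paper's argument. The paper argues by contradiction: if $H_A$ were not optimal for $X$, the optimal $H^*$ of $X$ would, since $A\subseteq X$, also separate $A$ with a margin strictly larger than $\gamma_A$. You run the same comparison of margins directly. Your explicit condition $\gamma_X(H_A)=\gamma_A(H_A)$ is exactly what the paper uses without saying so when it concludes $\gamma^*>\gamma_A$ from non-optimality; it says that $A$ contains the samples of $X$ nearest to $H_A$, as the active sets $A_i$ do in the algorithm, and your counterexample correctly shows it cannot be dropped.
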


\begin{proof}
Assume, for contradiction, that $H_A$ is the maximum-margin separating hyperplane for $A$, separates $X$, but $H_A$ is not the maximum-margin separating hyperplane for $X$, i.e., non-optimal. Since $X$ is linearly separable, there exists a maximum-margin separating hyperplane $H^*$ for $X$ whose margin $\gamma^*$ thus satisfies

\begin{equation}
\gamma^*>\gamma_A,
\end{equation}

where $\gamma_A$ is the margin of $H_A$ to $A$ .

Since $A \subseteq X$, hyperplane $H^*$ also separates all samples in $A$ with a margin $\gamma^*$, hence, $H^*$ separates $A$ with a margin strictly larger than $\gamma_A$, contradicting the assumption that $H_A$ is the maximum-margin separating hyperplane for $A$.

Therefore, $H_A$ is the optimal separating hyperplane for $X$.

Notice that global separation of $H_A$ is a necessary condition for local optimality to imply global optimality.
\end{proof}

% =======================================================

\begin{lemma}[Angle--margin relationship]
Let $X$ be a linearly separable dataset and let $H: w^Tx+b = 0$ be a separating hyperplane for $X$. Assume that $H$ is positioned equidistantly from the closest oppositely labeled samples. Let $A \subseteq X$ denote the active set of $H$, consisting of the samples lying on the positive and negative margin hyperplanes. Let $H_A^*: w^{*T}x+b^*=0$ be the maximum-margin separating hyperplane of $A$. 
Denote by $\gamma$ the margin of $H$, by $\gamma_{\scalebox{0.6}{A}}^*$ the margin of $H_A^*$ with respect to $A$, and by $\theta$ the angle between $w$ and $w^*$. Then

\[
\cos(\theta)=\frac{\gamma}{\gamma_{\scalebox{0.6}{A}}^*}.
\]
\end{lemma}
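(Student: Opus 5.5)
The plan is to express the unit normal of $H_A^*$ as the difference of two points, each of which lies on the corresponding margin hyperplane of $H$. The $w$-component of this difference is then exactly $2\gamma$, and its length is exactly $2\gamma_{\scalebox{0.6}{A}}^*$.

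\textbf{Step 1 (geometry of $H$).} By assumption, every positive active sample lies on $H^+$ and every negative active sample lies on $H^-$. These are parallel hyperplanes with normal $w$ at distance $2\gamma$ from each other. Hence for any positive $p\in A$ and negative $q\in A$,
\[
\frac{w^T(p-q)}{\norm{w}}=2\gamma .
\]
This identity is affine in $p$ and in $q$. It therefore extends to any convex combination $u$ of positive samples of $A$ and any convex combination $v$ of negative samples of $A$: $w^T(u-v)=2\gamma\norm{w}$.

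\textbf{Step 2 (structure of $w^*$).} I will invoke the standard dual/KKT characterization of the hard-margin SVM on $A$ \cite{cortes1995support, boyd2004convex}. It gives multipliers $\lambda_i\ge 0$ with $w^*=\sum_i\lambda_i y_i x_i$ and $\sum_i\lambda_i y_i=0$, supported on the support vectors of $H_A^*$. Let $\Lambda=\sum_{y_i=+1}\lambda_i=\sum_{y_i=-1}\lambda_i$. Then $\Lambda>0$, since otherwise $w^*=0$. Setting
\[
u=\frac{1}{\Lambda}\sum_{y_i=+1}\lambda_i x_i,\qquad v=\frac{1}{\Lambda}\sum_{y_i=-1}\lambda_i x_i,
\]
we get $w^*=\Lambda(u-v)$, so $u-v$ is parallel to $w^*$ with the same orientation. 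Moreover, $u$ lies on the positive margin hyperplane of $H_A^*$ and $v$ on the negative one, because they are convex combinations of support vectors. Since $u-v$ is orthogonal to both margin hyperplanes of $H_A^*$, its length equals their distance, so $\norm{u-v}=2\gamma_{\scalebox{0.6}{A}}^*$.

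\textbf{Step 3 (conclusion).} We compute
\[
\cos\theta=\frac{w^Tw^*}{\norm{w}\norm{w^*}}=\frac{w^T(u-v)}{\norm{w}\,\norm{u-v}}=\frac{2\gamma\norm{w}}{\norm{w}\cdot 2\gamma_{\scalebox{0.6}{A}}^*}=\frac{\gamma}{\gamma_{\scalebox{0.6}{A}}^*}.
\]
Step 1 applies to $u$ and $v$ because they are convex combinations of points of $A$ of the appropriate labels.

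The main obstacle is Step 2: the representation $w^*\parallel u-v$, with $u,v$ in the convex hulls of the positive and negative support vectors. If one prefers not to cite the dual, I would derive it directly. The maximum-margin direction is the direction of the shortest segment between the convex hulls of the positive and negative parts of $A$, and that segment has length $2\gamma_{\scalebox{0.6}{A}}^*$. This follows by a standard separation argument: a strictly better direction would contradict the minimality of $\norm{u-v}$.
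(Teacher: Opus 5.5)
Your proposal is correct and is essentially the paper's proof. You use the KKT representation to write $w^*$ as a positive multiple of $u-v$ (the paper's $P^*-N^*$), a difference of convex combinations of positive and negative support vectors, so that $\norm{u-v}=2\gamma_{\scalebox{0.6}{A}}^*$, while its component along $w/\norm{w}$ is $2\gamma$ because $u$ and $v$ lie on the margin hyperplanes of $H$; your explicit justification that $\Lambda>0$ is a small detail the paper leaves implicit.
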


\begin{proof}
Given $m$ linearly separable samples in the active set $A=\{(x_i,y_i)\}_{i=1}^m$ where $x_i \in \mathbb{R}^d$ and $y_i\in\{−1,+1\}$, the Karush--Kuhn--Tucker (KKT) conditions guarantee the existence of Lagrange multipliers $\lambda_i$ satisfying

\begin{equation}\label{kkt1}
w_{\scalebox{0.6}{KKT}} = \sum_{i=1}^{m} \lambda_i y_i x_i,
\end{equation}

\begin{equation}\label{kkt2}
 \sum_{i=1}^{m} \lambda_i y_i=0,
\end{equation}

\begin{equation}\label{kkt3}
 \lambda_i \ge0, \quad i=1,...,m.
\end{equation}

Moreover, $\lambda_i>0$ if and only if $x_i$ is a support vector. Let $S\subseteq A$ denote the set of support vectors, and partition it into the positive and negative support sets

\[
S=S^+\cup S^-,
\]

where 

\[
S^+=\{x_i\in S: y_i=+1\}, \quad S^-=\{x_i\in S: y_i=-1\}.
\]

Since $y_i=+1$ for $S^+$ and $y_i=-1$ for $S^-$, Equation~(\ref{kkt2}) reduces to:

\begin{equation}\label{balance}
\begin{aligned}
 \sum_{p \in S^+}\lambda_p - \sum_{n \in S^-} \lambda_n = 0.
\end{aligned}
\end{equation}

Let

\begin{equation}
\begin{aligned}
 C = \sum_{p \in S^+}\lambda_p = \sum_{n \in S^-} \lambda_n .
\end{aligned}
\end{equation}

Define (normalized) coefficients

\begin{equation}
\begin{aligned}
 \mu_p=\frac{\lambda_p}{C}, \quad \nu_n=\frac{\lambda_n}{C}.
\end{aligned}
\end{equation}

yielding 

\begin{equation}
\begin{aligned}
\sum_{p \in S^+}\mu_p = \sum_{n \in S^-} \nu_n = 1.
\end{aligned}
\end{equation}

Define $P^*$ and $N^*$: 

\begin{equation}
\begin{aligned}
P^*=\sum_{p\in S^+}\mu_p p,
\qquad
N^*=\sum_{n\in S^-}\nu_n n,
\end{aligned}
\end{equation}

which are the convex combinations of the positive and negative support vectors, respectively.

Substituting the definitions of $P^*$ and $N^*$ into Equation~(\ref{kkt1}) yields

\begin{equation}
\begin{aligned}
w_{\scalebox{0.6}{KKT}} = C (P^* - N^*), \quad C>0.
\end{aligned}
\end{equation}

Since $w^*$ is the normal vector of the optimal separating hyperplane $H_A^*$ and $w_{\scalebox{0.6}{KKT}}$ is the normal vector obtained from the KKT conditions, both vectors define the same optimal hyperplane (they differ only by a nonzero scaling factor). Therefore

\begin{equation}
\begin{aligned}
 w^* \, \parallel  \,  P^* - N^*.
\end{aligned}
\end{equation}

Moreover, since $P^*$ and $N^*$ are convex combinations of positive and negative support vectors, respectively, they lie on the positive and negative margin hyperplanes of the optimal separator $H^*$. These two margin hyperplanes are parallel and separated by a distance of $2\gamma_A^*$, and as shown above, are orthogonal to $P^* - N^*$. Therefore, the distance between $P^*$ and $N^*$ equals twice the optimal margin:

\begin{equation}\label{two_gamma_star}
\begin{aligned}
\norm{P^* - N^*} = 2\gamma_{\scalebox{0.6}{A}}^*.
\end{aligned}
\end{equation}

Since all samples in the active set $A$, including the support vectors, lie on the margin hyperplane of $H$, which is orthogonal to $w$, so as any convex combination of the positive and the negative supports, respectively. Consequently, $P^*$ and $N^*$ are contained by positive and negative margin hyperplanes of $H$. These two margin hyperplanes are parallel and separated by a distance of $2\gamma$, therefore the orthogonal projection of $P^* - N^*$ onto the unit normal direction of $H$ satisfies

\begin{equation}
\begin{aligned}
   \left(\frac{w}{\norm{w}}\right)^T (P^* - N^*) = 2\gamma.
\end{aligned}
\end{equation}

The angle between $w$ and $w^*$ is $\theta$. Since $w^*$ is parallel  to $P^* - N^*$, $\angle(w, P^* - N^*)=\theta$. Expanding the inner product and using Equation \ref{two_gamma_star}, we yield

\begin{equation}
\begin{aligned}
\frac{\norm{w}\norm{P^* - N^*} \cos{\theta}}{\norm{w}} =2\gamma,
\end{aligned}
\end{equation}

and therefore

\begin{equation}
\begin{aligned}
2\gamma_{\scalebox{0.6}{A}}^* \cos{\theta} =2\gamma.
\end{aligned}
\end{equation}

Consequently,

\begin{equation}
\begin{aligned}
\cos{\theta} = \frac{\gamma}{\gamma_{\scalebox{0.6}{A}}^*}.
\end{aligned}
\end{equation}

\end{proof}

% =======================================================

\begin{lemma}[Optimality--projection intersection criterion]

Let $X$ be a linearly separable dataset, let $H: w^Tx+b = 0$ be a separating hyperplane for $X$. Assume that $H$ is positioned equidistantly from the closest oppositely labeled samples. Let $A \subseteq X$ denote the active set of $H$, consisting of the samples lying on the positive and negative margin hyperplanes. Let $A^+,A^- \subseteq A, A=A^+\cup A^-$ denote the sets of positive and negative samples of $A$, respectively. Let $\Pi_H(A^+)$ and $\Pi_H(A^-)$ denote their orthogonal projections onto $H$. Then $H$ is the optimal separating hyperplane of $X$ if and only if
\[
\operatorname{conv}(\Pi_H(A^+)) \cap \operatorname{conv}(\Pi_H(A^-))\neq\emptyset.
\]

\end{lemma}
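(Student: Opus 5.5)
We reduce global optimality to local optimality on the active set and then characterize local optimality geometrically. By Lemma~1, since $H$ separates $X$, $H$ is optimal for $X$ if and only if $H$ coincides with the maximum-margin separator $H_A^*$ of $A$. (The converse also holds: if $H$ is optimal for $X$, its support vectors lie in $A$. The optimal separator of $X$ is the optimal separator of its support-vector set, since any better separator of that set would contradict Lemma~1 applied with $A$ replaced by that set. Hence $H=H_A^*$.) So it suffices to show that $H=H_A^*$ if and only if the projected hulls intersect. Write $u=w/\norm{w}$, so that every $p\in A^+$ satisfies $u^Tp=c^+$ and every $n\in A^-$ satisfies $u^Tn=c^-$, with $c^+-c^-=2\gamma$.

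\emph{($\Leftarrow$)} Suppose $\operatorname{conv}(\Pi_H(A^+))\cap\operatorname{conv}(\Pi_H(A^-))\neq\emptyset$. Projection is affine, so it commutes with convex combinations. Hence there exist $P\in\operatorname{conv}(A^+)$ and $N\in\operatorname{conv}(A^-)$ with $\Pi_H(P)=\Pi_H(N)$. Since $u^TP=c^+$ and $u^TN=c^-$, this gives $P-N=2\gamma u$.

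Now take any unit vector $v$ and offset $\beta$ such that $v^Tx+\beta$ separates $A$ with margin $\gamma'$. Then $v^TP\ge \gamma'-\beta$ and $v^TN\le-\gamma'-\beta$, because these inequalities hold at every vertex and are preserved by convex combinations. Therefore $2\gamma'\le v^T(P-N)=2\gamma\, v^Tu\le 2\gamma$. So no separator of $A$ beats $\gamma$, and $H$ is the maximum-margin separator of $A$; by Lemma~1 it is globally optimal.

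\emph{($\Rightarrow$)} Suppose $H=H_A^*$. Follow the proof of Lemma~2: the KKT multipliers give points $P^*\in\operatorname{conv}(S^+)\subseteq\operatorname{conv}(A^+)$ and $N^*\in\operatorname{conv}(S^-)\subseteq\operatorname{conv}(A^-)$ with $w^*\parallel P^*-N^*$. Since $w\parallel w^*$, the vector $P^*-N^*$ is a multiple of $u$. Its component along $H$ is therefore zero, so $\Pi_H(P^*)=\Pi_H(N^*)$, and this common point lies in both projected hulls.

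An alternative route for this direction, which avoids KKT, is by contraposition. If the projected hulls are disjoint, choose the closest pair of points in the two hulls and let $d$ be the vector joining them; $d$ lies parallel to $H$. Then $d$ separates the two projected hulls, and tilting $w$ slightly toward $d$ about a suitable pivot strictly increases the margin on $A$, which contradicts local optimality. Either route works, but the KKT route is shorter given Lemma~2.

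\textbf{Main obstacle.} The delicate point is making sure Lemma~1 is used correctly in both directions: its hypothesis is that $H$ separates $X$, which holds by assumption, and the converse direction requires the remark that the active set $A$ contains all support vectors of the global optimum. A second point to handle is the existence of the KKT representation for $A$, which we take as in Lemma~2. Everything else is linear algebra on affine projections.
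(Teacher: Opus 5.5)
\paragraph{Assessment.} Your ($\Leftarrow$) direction is correct, and it takes a different, more self-contained route than the paper. The paper observes that $\sum_i\lambda_i y_i x_i\parallel w$ with $\sum_i\lambda_iy_i=0$ and calls this ``the KKT representation'', which tacitly invokes KKT sufficiency after rescaling. You instead bound every competing separator of $A$ directly by $2\gamma'\le v^T(P-N)=2\gamma\,v^Tu\le2\gamma$. Since every separator of $X$ also separates $A$, this bound even makes the appeal to Lemma~1 unnecessary. Your step from $H=H_A^*$ to a common projected point $\Pi_H(P^*)=\Pi_H(N^*)$ is also fine.

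\paragraph{The gap.} It lies in the reduction that feeds ($\Rightarrow$). You need that global optimality of $H$ implies $H=H_A^*$, and your justification for this does not work. Lemma~1 only says that a subset-optimal hyperplane which \emph{also separates all of $X$} is globally optimal. Suppose the support set $S$ had a better separator. Its optimal separator $H_S^*$ is not known to separate $X$, and if it does not, Lemma~1 is silent and no contradiction arises. What you need is the converse-type fact: a global optimum is also optimal on the set of its closest samples. The remark that $A$ contains the support vectors does not give this.

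\paragraph{Repairs.} Either of two easy fixes works.
\begin{itemize}
\item Apply KKT to the hard-margin problem on all of $X$, as the paper does. Complementary slackness forces $\lambda_i=0$ for every sample off the margin hyperplanes. Hence $P^*\in\operatorname{conv}(A^+)$, $N^*\in\operatorname{conv}(A^-)$ and $P^*-N^*\parallel w$ directly, with no detour through $H_A^*$.
\item Argue by perturbation. Normalize $H$ so that $y_i(w^Tx_i+b)\ge1$, with equality exactly on $A$. Suppose some separator of $A$ had canonical parameters $(w',b')$ with $\|w'\|<\|w\|$. Then $(1-t)(w,b)+t(w',b')$ satisfies the constraints on $A$ for all $t\in[0,1]$. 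It satisfies the constraints off $A$ for small $t>0$, because those have strict slack. By the triangle inequality its normal has norm $<\|w\|$, contradicting optimality on $X$.
\end{itemize}
Your tilting sketch for the contrapositive closes the same way. Non-active samples lie strictly farther than $\gamma$ from $H$, so a small enough tilt cannot pull any of them into the slab. The tilt therefore contradicts global optimality directly, not merely local optimality.
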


\begin{proof}
We prove both directions.

\medskip
\noindent
($\Rightarrow$)
First, we show that optimality implies the intersection of the corresponding convex hulls. 

If $H$ is optimal, then, as shown in the proof of Lemma~2, the Karush--Kuhn--Tucker (KKT) conditions guarantee the existence of normalized coefficients $\mu_p$ and $\nu_n$ that define points $P^*$ and $N^*$ as convex combinations of the positive and negative support vectors, denoted by $S^+ \subseteq A^+$ and $S^- \subseteq A^+$, respectively, can be written as
 
\begin{equation}\label{conv_comb}
\begin{aligned}
    P^*=\sum_{p\in S^+}\mu_p p, \quad\quad N^*=\sum_{n\in S^-}\nu_n n,
    \end{aligned}    
\end{equation}
with
\begin{equation}\label{sum_to_one}
\begin{aligned}
    \sum_{p \in S^+}\mu_p = \sum_{n \in S^-} \nu_n = 1, \quad \mu_p, \nu_n \ge 0.
\end{aligned}    
\end{equation}

Optimality implies that the vector from $N^*$ to $P^*$ is parallel to the normal vector $w$ of $H$, therefore the orthogonal projections of $P^*$ and $N^*$ onto $H$ coincide. Denote their common projection by $M^*$.

The margin is equal to half the distance between $N^*$ and $P^*$. Define

\begin{equation}
\begin{aligned}
    \hat{w}=\frac{1}{2}(P^* - N^*).
\end{aligned}    
\end{equation}

Then

\begin{equation}\label{m_star}
\begin{aligned}
    M^* = P^* - \hat{w} = N^* + \hat{w}.
\end{aligned}    
\end{equation}

Using the coefficients in Equation~\ref{conv_comb} together with Equation~\ref{m_star}, we obtain

\begin{equation}\label{p_conv}
\begin{aligned}
    M^* = P^* - \hat{w} = \sum_{p\in S^+}\mu_p p - \hat{w} \cdot 1 = \sum_{p\in S^+}\mu_p p - \hat{w} \sum_{p\in S^+}\mu_p  = \sum_{p\in S^+}\mu_p (p-\hat{w}),
\end{aligned}    
\end{equation}

and similarly,

\begin{equation}\label{n_conv}
\begin{aligned}
     M^* = N^* + \hat{w} =  \sum_{n\in S^-}\nu_n n + \hat{w} \cdot 1= \sum_{n\in S^-}\nu_n n + \hat{w} \sum_{n\in S^-}\nu_n = \sum_{n\in S^-}\nu_n (n+\hat{w}).
\end{aligned}    
\end{equation}

Orthogonal projection is an affine transformation, the projected points are given by $p-\hat{w}$ for $p\in S^+ \subseteq A^+$ and $n+\hat{w}$ for $n \in S^- \subseteq A^-$, therefore, $M^*$ is a convex combination of both $\Pi_H(A^+)$ and $\Pi_H(A^-)$ by Equations~\ref{p_conv} and ~\ref{n_conv}, implying that

\begin{equation}
\begin{aligned}
    M^* \in conv(\Pi_H(A^+)) \cap conv(\Pi_H(A^-))).
\end{aligned}    
\end{equation}

Hence, 

\begin{equation}
\begin{aligned}
    conv(\Pi_H(P)) \cap conv(\Pi_H(N))) \neq \emptyset.
\end{aligned}    
\end{equation}

% ------------------------------------------

\medskip
\noindent
($\Leftarrow$)
Now, we show that the intersection of the corresponding convex hulls implies optimality.

Assume that

\begin{equation}
\begin{aligned}
    conv(\Pi_H(A^+)) \cap conv(\Pi_H(A^-)) \neq \emptyset.
\end{aligned}    
\end{equation}

Choose

\begin{equation}
\begin{aligned}
    M \in conv(\Pi_H(A^+)) \cap conv(\Pi_H(A^-)).
\end{aligned}    
\end{equation}

Then there exist coefficients

\begin{equation}
\begin{aligned}
\lambda_p \ge 0, \quad \lambda_n \ge 0,
\end{aligned}
\end{equation}

such that 

\begin{equation}
\begin{aligned}
\sum_{p \in A^+}\lambda_p = \sum_{n \in A^-} \lambda_n = 1,
\end{aligned}
\end{equation}

and 

\begin{equation}\label{M_as_conv}
\begin{aligned}
M = \sum_{p\in A^+}\lambda_p \Pi_H(p) = \sum_{n\in A^-}\lambda_n \Pi_H(n).
\end{aligned}    
\end{equation}

Let $\gamma$ denote the margin of $H$ and define

\begin{equation}
\begin{aligned}
    \hat{w}=\frac{\gamma}{\norm{w}}w.
\end{aligned}    
\end{equation}

Since every sample in the active set lies exactly at distance $\gamma$ from $H$,

\begin{equation}
\begin{aligned}
\Pi_H(p) + \hat{w}=p, \quad \Pi_H(n) - \hat{w}=n
\end{aligned}    
\end{equation}

for every 

\begin{equation}
\begin{aligned}
p \in A^+, \quad n \in A^-.
\end{aligned}    
\end{equation}

Define $P$

\begin{equation}
\begin{aligned}
P = \sum_{p\in A^+}\lambda_p p = \sum_{p\in A^+}\lambda_p (\Pi_H(p) + \hat{w}) = M + \hat{w},
\end{aligned}    
\end{equation}

and $N$

\begin{equation}
\begin{aligned}
N = \sum_{p\in A^-}\lambda_n n = \sum_{n\in A^-}\lambda_n (\Pi_H(n) - \hat{w}) = M - \hat{w}.
\end{aligned}    
\end{equation}

Therefore,

\begin{equation}
\begin{aligned}
P - N = 2\hat{w} = \frac{2\gamma}{\norm{w}}w.
\end{aligned}    
\end{equation}

Hence,

\begin{equation}
\begin{aligned}
P - N \parallel w.
\end{aligned}    
\end{equation}

On the other hand,

\begin{equation}
\begin{aligned}
P - N = \sum_{p\in A^+}\lambda_p p - \sum_{p\in A^-}\lambda_n n = \sum_{i\in A}\lambda_i y_i x_i  
\end{aligned}    
\end{equation}

where

\begin{equation}
\begin{aligned}
\lambda_i=
\begin{cases}
\lambda_p, & x_i\in A^+,\\
\lambda_n, & x_i\in A^-.
\end{cases}
\end{aligned}    
\end{equation}

Moreover,

\begin{equation}
\begin{aligned}
\sum_{i\in A}\lambda_i y_i = \sum_{p\in A^+}\lambda_p - \sum_{p\in A^-}\lambda_n = 0.
\end{aligned}    
\end{equation}

Thus the coefficients satisfy the balance condition of the dual SVM formulation, and

\begin{equation}
\begin{aligned}
\sum_{i\in A}\lambda_i y_i x_i = \frac{2\gamma}{\norm{w}}w,
\end{aligned}    
\end{equation}

which is the KKT representation of the normal vector (up to the positive scaling factor $2\gamma/\norm{w}$).

Therefore, $w$ is parallel to the normal vector of the maximum-margin separating hyperplane of $A$. Since $H$ is assumed to be positioned equidistantly between the positive and negative active samples, its bias also coincides with that of the maximum-margin separator. Hence, $H$ itself is the maximum-margin separating hyperplane of active set $A$.

Since $H$ separates the entire dataset $X$ and is the maximum-margin separating hyperplane of $A$, Lemma 1 implies that $H$ is also the maximum-margin separating hyperplane of $X$.
\end{proof}

Note that the assumption that $H$ is positioned equidistantly from the closest oppositely labeled samples is introduced only to simplify the proof. The necessary and sufficient optimality condition based on the intersection of the projected convex hulls holds even without this assumption.

\end{document}